\documentclass{article}

\PassOptionsToPackage{numbers}{natbib}
\usepackage[preprint]{neurips_2026}

\usepackage[utf8]{inputenc}
\usepackage[T1]{fontenc}
\usepackage{hyperref}
\usepackage{url}
\usepackage{booktabs}
\usepackage{amsfonts}
\usepackage{nicefrac}
\usepackage{microtype}
\usepackage{xcolor}
\usepackage{graphicx}
\usepackage{amsmath}
\usepackage{amssymb}
\usepackage{amsthm}
\usepackage{algorithm}
\usepackage{algorithmic}
\usepackage{mathtools}  

\newtheorem{proposition}{Proposition}

\title{Budget-Constrained Causal Bandits: Bridging Uplift Modeling and Sequential Decision-Making}

\author{%
  Abhirami Pillai\\
  \texttt{arp220005@utdallas.edu} \\
}

\begin{document}

\maketitle

\begin{abstract}
Treatment allocation under budget constraints is a central challenge in digital advertising: advertisers must decide which users to show ads to while spending a limited budget wisely. The standard approach follows a two-stage offline pipeline---first collect historical data to estimate heterogeneous treatment effects (HTE), then solve a constrained optimization to allocate the budget. This works well with abundant data, but fails in cold-start settings such as new campaigns, new markets, or new customer segments where little historical data exists. We propose Budget-Constrained Causal Bandits (BCCB), an online framework that learns which users respond to ads while simultaneously spending the budget, making treatment decisions one user at a time. BCCB unifies three components into a single sequential process: learning individual-level ad effectiveness, exploring users whose response is uncertain, and pacing the budget over time. We derive the per-arrival decision rule as the KKT condition of a Lagrangian relaxation of the budgeted causal-allocation objective, providing a principled foundation for the algorithm. We evaluate on the Criteo Uplift dataset, a large-scale benchmark derived from a real randomized controlled trial, using 20 random seeds with paired statistical tests. Our central empirical finding is a data-efficiency crossover identified by formal hypothesis test at $n = 7{,}500$ historical observations (paired one-sided $t$-test, $p = 0.043$): below this threshold, offline uplift pipelines either fail entirely or produce highly unreliable allocations, while BCCB operates effectively from the very first user. BCCB exhibits 2--4$\times$ lower run-to-run variance than offline methods across the training-size range tested, and outperforms all four online baselines---standard Thompson Sampling, budgeted Thompson Sampling, HTE Greedy, and Uplifting Bandits~\citep{hsieh2022}, a contextual causal bandit that also estimates per-user treatment effects---at every budget level tested with paired $t$-test $p < 0.001$. These results provide practitioners with a concrete decision rule for choosing between offline and online paradigms in budget-constrained treatment allocation.
\end{abstract}

\section{Introduction}

Digital advertising is a multi-billion dollar industry where advertisers must allocate limited budgets across large user populations. The fundamental challenge is that not all users respond equally to advertising---some users are strongly influenced by ads, others would have converted regardless, and some are entirely unaffected. This variation in individual-level response, known as heterogeneous treatment effects (HTE) \citep{athey2016, wager2018}, means that treating all users equally wastes budget on users who would not benefit from the ad.

A substantial body of work addresses this problem through uplift modeling: estimating the causal effect of showing an ad to each individual user, then targeting those with the highest predicted response \citep{gutierrez2017, diemert2018}. When combined with budget constraints, this becomes a resource allocation problem: select the set of users to treat that maximizes total uplift without exceeding the budget. The dominant approach follows a two-stage pipeline: first train an uplift model on historical randomized trial data, then solve a constrained optimization typically formulated as a Multi-Choice Knapsack Problem to determine which users to target \citep{zhao2019, albert2022, zhou2023}.

However, this two-stage pipeline suffers from well-documented limitations. The prediction and optimization stages are trained with misaligned objectives: the machine learning model minimizes statistical error globally, while the optimizer requires accuracy specifically at the budget-induced decision boundary. Errors from the prediction stage cascade into the optimization stage and are amplified by the deterministic solver. Furthermore, unconstrained predictive models often produce non-monotonic uplift curves that violate basic economic assumptions about consumer behavior.

Recent work has made significant progress in addressing these limitations through end-to-end Decision-Focused Learning (DFL), which unifies prediction and optimization into a single differentiable architecture. The End-to-End Cost-Effective Incentive Recommendation model (E3IR) integrates Integer Linear Programming directly into the neural network as a differentiable layer, enforcing monotonicity and smoothness through Lipschitz regularization \citep{du2024}. Bi-Level Decision-Focused Causal Learning (Bi-DFCL) resolves the counterfactual problem by using scarce randomized trial data to correct biased observational models through implicit differentiation \citep{bidfcl2025}. Direct Heterogeneous Causal Learning (DHCL) prevents error cascading by training models to directly predict a composite decision factor rather than raw uplift estimates \citep{dhcl2023}. Large-Scale Budget-Constrained Causal Forests (LBCF) integrate budget constraints directly into tree-splitting criteria for distributed computing environments \citep{lbcf2022}.

Despite this progress, all existing approaches---whether two-stage or end-to-end---share a fundamental assumption: the availability of sufficient historical data for offline model training. This assumption frequently breaks down in practice. New advertising campaigns have no historical data. Expansion into new markets or customer segments means historical data may not be representative. Seasonal shifts, product launches, and changing user behavior can render historical models outdated. In all these scenarios, the offline paradigm---regardless of its architectural sophistication---either fails entirely or produces unreliable allocations.

An alternative paradigm exists in the multi-armed bandit literature, where decisions are made sequentially and the algorithm learns from each observation in real time \citep{thompson1933, russo2018}. Budgeted bandits extend this framework to settings where each decision has a cost and total spending is constrained \citep{xia2015, wu2024}. However, existing budgeted bandit methods do not explicitly model heterogeneous treatment effects---they treat the problem as choosing between generic arms rather than estimating individual-level causal effects that vary with user characteristics.

In this paper, we bridge these paradigms. We propose Budget-Constrained Causal Bandits (BCCB), an online framework that combines HTE learning with Thompson-Sampling exploration and adaptive budget pacing. Unlike all offline methods---whether two-stage or end-to-end---BCCB does not require pre-collected historical data. It learns and allocates simultaneously, making treatment decisions one user at a time. Unlike standard bandit methods, BCCB explicitly models heterogeneous treatment effects and incorporates per-user cost variation into its decision rule.

Our main contributions are as follows:
\begin{enumerate}
    \item We propose BCCB, a unified online framework for budget-constrained treatment allocation that combines HTE estimation, Thompson Sampling exploration, and adaptive budget pacing into a single sequential decision process. We derive its per-arrival decision rule from a Lagrangian relaxation of the budget-constrained policy-optimization objective, providing a principled foundation for the heuristic combination of components.
    \item We provide an empirical characterization of the data-efficiency tradeoff between offline and online paradigms on the Criteo Uplift dataset \citep{diemert2018}. We identify a crossover point by formal hypothesis test at $n = 7{,}500$ historical observations (paired one-sided $t$-test, $p = 0.043$), persistent at all larger training sizes ($p < 0.001$ for $n \ge 10{,}000$).
    \item We demonstrate that BCCB exhibits 2--4$\times$ lower run-to-run variance than offline methods across all training-data sizes tested, offering more predictable outcomes for campaign planning.
    \item Among purely online methods, BCCB outperforms standard Thompson Sampling, budgeted Thompson Sampling, HTE Greedy, and Uplifting Bandits~\citep{hsieh2022}---a contextual causal bandit that estimates per-user treatment effects via UCB-style confidence bounds---at every budget level tested, with paired $t$-test $p < 0.001$ at every comparison.
    \item We provide a hyperparameter sensitivity analysis showing that BCCB is robust to four of its five hyperparameters, with only the exploration weight $\eta$ requiring careful selection.
\end{enumerate}

\section{Related Work}

Our work draws on three areas of research: uplift modeling and budget-constrained treatment allocation, bandit algorithms under resource constraints, and the emerging study of offline-to-online transfer in sequential decision-making. We review each in turn and position our contribution relative to the existing literature.

\subsection{Uplift Modeling and Budget-Constrained Treatment Allocation}

Uplift modeling estimates the causal effect of a treatment on an individual's outcome, distinguishing users who are genuinely influenced by an intervention from those who would behave the same regardless \citep{gutierrez2017, devriendt2018}. The foundational framework relies on the Neyman--Rubin potential outcomes model, where the individual treatment effect is defined as the difference between treated and untreated potential outcomes \citep{rubin1974}. Because both outcomes cannot be observed for the same individual, a range of estimation strategies have been developed, including meta-learners such as the T-learner, S-learner, and X-learner \citep{kunzel2019}, as well as tree-based methods such as Causal Forests \citep{athey2016, wager2018}.

When treatment allocation is subject to a budget constraint, the problem is typically formulated as a variant of the Multi-Choice Knapsack Problem (MCKP), where the goal is to select a subset of users to treat that maximizes total uplift without exceeding the available budget \citep{zhao2019}. This gives rise to a two-stage pipeline: first estimate individual-level treatment effects using a causal inference model, then solve the constrained allocation problem using operations research techniques such as Lagrangian relaxation \citep{albert2022, zhou2023}. The dominant industrial framework for this two-stage approach is exemplified by \citet{goldenberg2020}, whose offline pipeline serves as the baseline our online bandit directly complements.

Several recent works have identified fundamental limitations of this decoupled approach. The prediction and optimization stages optimize different objectives, leading to misalignment at the decision boundary \citep{elmachtoub2022}. Errors from the prediction stage are amplified rather than absorbed by the deterministic optimizer. To address these issues, end-to-end Decision-Focused Learning (DFL) frameworks have been proposed. The End-to-End Cost-Effective Incentive Recommendation model (E3IR) integrates Integer Linear Programming as a differentiable layer within the neural network, enforcing monotonicity through Lipschitz regularization \citep{du2024}. Bi-Level Decision-Focused Causal Learning (Bi-DFCL) uses implicit differentiation to combine scarce randomized trial data with large-scale observational data, achieving an automatic bias-variance tradeoff \citep{bidfcl2025}. Direct Heterogeneous Causal Learning (DHCL) reduces error cascading by training the model to predict a composite decision factor directly \citep{dhcl2023}. Large-Scale Budget-Constrained Causal Forests (LBCF) embed budget awareness into the tree-splitting criteria of causal forests for distributed computing environments \citep{lbcf2022}. Recent benchmarking efforts have systematically compared 13 deep uplift models across multiple datasets, finding that newer methods offer more limited improvement over traditional approaches than previously believed \citep{liu2024benchmark}.

Despite the architectural diversity of these methods, they all require sufficient historical data for offline model training. \citet{sawant2018} first proposed combining Thompson Sampling with CATE estimation for marketing but without a formal budget constraint. The most closely related concurrent work is DISCO \citep{zhang2024disco}, which applies Thompson Sampling with an integer program for personalized discount allocation, though it does not estimate CATE in the formal HTE sense and does not study the offline-vs-online tradeoff. \citet{berrevoets2022} bridge CATE and contextual bandits explicitly, noting budget constraints as future work---a gap our paper directly addresses. Our work addresses the complementary setting where sufficient historical data is unavailable.

\subsection{Bandit Algorithms Under Budget Constraints}

The multi-armed bandit framework provides a principled approach to sequential decision-making under uncertainty, balancing exploration of uncertain options with exploitation of known good options \citep{thompson1933, lattimore2020}. Thompson Sampling, which selects actions by sampling from a posterior distribution over reward parameters, has emerged as a practical and theoretically grounded exploration strategy \citep{russo2018}.

Budgeted variants of the bandit problem introduce a resource constraint: each arm pull incurs a cost, and the total expenditure must not exceed a fixed budget. \citet{xia2015} study Thompson Sampling under knapsack-style budget constraints. \citet{rangi2019} extend budgeted bandits to settings with stochastic costs. More recently, \citet{wu2024} improve budgeted Thompson Sampling through information relaxation techniques that incorporate remaining budget information into the sampling procedure. Contextual bandits generalize the framework by conditioning decisions on observed features \citep{li2010, agrawal2013}, enabling personalization. \citet{hsieh2022} use the term ``uplift'' in a structural decomposition sense distinct from the Neyman--Rubin CATE framework we adopt here.

However, existing budgeted bandit methods model the problem as a choice among generic arms. They do not explicitly estimate heterogeneous treatment effects---that is, they do not learn how the \emph{causal} impact of treatment varies across individuals as a function of their characteristics. Our work combines the sequential, budget-aware decision-making of budgeted bandits with the individual-level causal effect estimation of uplift modeling. \citet{hsieh2022} propose Uplifting Bandits, which estimates per-user uplift with UCB-style confidence bounds; we include it as a baseline and show that BCCB outperforms it at every budget level, demonstrating the advantage of Thompson Sampling with calibrated optimism over UCB for this task.

\subsection{Offline-to-Online Transfer and Cold-Start Learning}

A growing body of work studies how offline data can accelerate online learning. \citet{li2021} and \citet{tang2021} propose frameworks that extract information from logged data to warm-start bandit algorithms, mitigating the cold-start problem. \citet{hong2024} study latent bandits where offline datasets with unobserved heterogeneity are used to learn complex models for each latent state, enabling faster online adaptation. In the advertising domain specifically, \citet{coldstartads2025} address the cold-start problem for new ad products using a UCB-based algorithm tailored to pay-per-click auction systems. The DARA framework uses reinforcement-learning-finetuned large language models for few-shot budget allocation in data-scarce advertising scenarios \citep{dara2026}.

These works demonstrate the value of bridging offline and online paradigms, but none empirically characterize the conditions under which online learning outperforms offline methods for treatment allocation. Our primary empirical contribution is precisely this characterization: we identify the amount of historical data at which offline methods become viable, and show that below this threshold, online approaches that learn and allocate simultaneously offer superior and more stable performance.

\section{Problem Formulation}
\label{sec:problem}

We consider a sequential treatment allocation setting motivated by online advertising. Users arrive one at a time over a horizon of $T$ rounds. At each round $t \in \{1, \ldots, T\}$, the decision-maker observes a user with feature vector $\mathbf{x}_t \in \mathbb{R}^d$ and an associated cost $c_t > 0$ representing the price of serving an advertisement to that user. We assume the pairs $(\mathbf{x}_t, c_t)$ are drawn i.i.d.\ from an unknown distribution $\mathcal{D}$. The decision-maker chooses an action $a_t \in \{0, 1\}$, where $a_t = 1$ denotes showing the advertisement (treatment) and $a_t = 0$ denotes not showing it (control).

\subsection{Potential Outcomes and Causal Estimand}

Following the Neyman--Rubin potential outcomes framework \citep{rubin1974}, each user has two potential outcomes, $Y_t(1) \in \{0,1\}$ if treated and $Y_t(0) \in \{0,1\}$ if not treated, of which only the one corresponding to the chosen action is observed:
\begin{equation}
Y_t = a_t Y_t(1) + (1 - a_t) Y_t(0).
\label{eq:observed-outcome}
\end{equation}
The Conditional Average Treatment Effect (CATE) for a user with features $\mathbf{x}_t$ is
\begin{equation}
\tau(\mathbf{x}_t) \;\coloneqq\; \mathbb{E}\!\left[Y_t(1) - Y_t(0) \,\middle|\, \mathbf{x}_t\right],
\label{eq:cate}
\end{equation}
representing the expected causal effect of showing the advertisement to a user with features $\mathbf{x}_t$. This quantity varies across users---some have large positive $\tau(\mathbf{x}_t)$ (responsive to advertisements), some have $\tau(\mathbf{x}_t) \approx 0$ (unaffected), and some may have negative $\tau(\mathbf{x}_t)$ (ad-fatigued).

We assume \emph{unconfoundedness} conditional on $\mathbf{x}_t$:
\begin{equation}
\{Y_t(0),\, Y_t(1)\} \;\perp\!\!\!\perp\; a_t \,\big|\, \mathbf{x}_t,
\label{eq:unconf}
\end{equation}
together with overlap, $0 < \mathbb{P}(a_t = 1 \mid \mathbf{x}_t) < 1$ for all $\mathbf{x}_t$ in the support of $\mathcal{D}$. Both conditions are satisfied by construction in the randomized controlled trial that generated our evaluation data \citep{diemert2018}.

\subsection{Budget-Constrained Allocation Objective}

The decision-maker operates under a total budget $B > 0$, subject to the constraint
\begin{equation}
\sum_{t=1}^{T} a_t\, c_t \;\leq\; B.
\label{eq:budget}
\end{equation}
The advertiser's economic objective is to maximize the number of conversions that the campaign \emph{causes}---i.e., the conversions that would not have occurred without advertising. Formally, the expected policy value is
\begin{align}
V(\pi) \;&=\; \mathbb{E}\!\left[\sum_{t=1}^{T} \Big( a_t Y_t(1) + (1 - a_t) Y_t(0) \Big) \right] \nonumber \\
       &=\; \underbrace{\mathbb{E}\!\left[\sum_{t=1}^{T} Y_t(0)\right]}_{\text{baseline (action-independent)}} \;+\; \mathbb{E}\!\left[\sum_{t=1}^{T} a_t\, \tau(\mathbf{x}_t) \right].
\label{eq:policy-value}
\end{align}
Because the baseline term in Eq.~\eqref{eq:policy-value} does not depend on the policy, maximizing total policy value is equivalent to maximizing the \emph{incremental} value $\mathbb{E}\!\left[\sum_t a_t \tau(\mathbf{x}_t)\right]$. We therefore state the optimization problem as
\begin{equation}
\max_{\pi}\quad \mathbb{E}\!\left[\sum_{t=1}^{T} a_t\, \tau(\mathbf{x}_t)\right] \quad \text{subject to} \quad \sum_{t=1}^{T} a_t\, c_t \;\leq\; B.
\label{eq:objective}
\end{equation}

\paragraph{Relation to prior formulations.}
Several recent works state the budgeted treatment-allocation objective as maximizing $\sum_t a_t Y_t(1)$, i.e., treated-user conversions \citep{zhao2019, du2024}. By Eq.~\eqref{eq:policy-value}, this differs from the incremental objective in Eq.~\eqref{eq:objective} by the action-dependent term $\mathbb{E}[\sum_t a_t Y_t(0)]$, which is non-zero whenever control-group conversion is positive. The incremental formulation in Eq.~\eqref{eq:objective} is the causally consistent objective and is what every CATE-based decision rule in the literature implicitly optimizes; we make this dependence explicit.

\paragraph{Cost independence assumption.}
We assume that serving costs $c_t$ are determined by auction dynamics and are conditionally independent of potential outcomes given features: $\{Y_t(0), Y_t(1)\} \perp c_t \mid \mathbf{x}_t$. This holds in settings where costs reflect market-level bid prices rather than individual conversion propensity. Under this assumption, the cost-effectiveness threshold $\tau(\mathbf{x}_t)/c_t$ in Eq.~\eqref{eq:decision-rule} uses the correct causal numerator, and the optimal policy is well-specified. In the Criteo dataset, costs are synthetic and generated independently of outcomes by construction (Section~\ref{sec:setup}), so this assumption is satisfied exactly.

\paragraph{Policy class and separability.}
The Lagrangian decomposition in Section~\ref{sec:lagrangian} restricts to stationary Markov policies $\pi: (\mathbf{x}_t, c_t) \to [0,1]$ mapping context-cost pairs to treatment probabilities. Under this policy class and i.i.d.\ arrivals, the expectation in Eq.~\eqref{eq:lagrangian} separates additively across rounds, justifying the per-arrival pointwise maximization that yields Eq.~\eqref{eq:decision-rule}.

\subsection{Lagrangian Relaxation and the Per-Arrival Decision Rule}
\label{sec:lagrangian}

The constrained program in Eq.~\eqref{eq:objective} admits a clean Lagrangian relaxation. Introducing a dual variable $\lambda \geq 0$ for the budget constraint:
\begin{equation}
\mathcal{L}(\pi, \lambda) \;=\; \mathbb{E}\!\left[\sum_{t=1}^{T} a_t\bigl(\tau(\mathbf{x}_t) - \lambda c_t\bigr)\right] + \lambda B.
\label{eq:lagrangian}
\end{equation}
Because arrivals are i.i.d.\ and the inner objective in Eq.~\eqref{eq:lagrangian} separates additively across rounds, the Lagrangian decomposes into per-arrival decisions. For any fixed $\lambda$, the action that maximizes $\mathcal{L}$ at round $t$ is
\begin{equation}
a_t^{*}(\lambda) \;=\; \mathbb{I}\!\Big[\,\tau(\mathbf{x}_t) - \lambda c_t \;>\; 0\,\Big] \;=\; \mathbb{I}\!\left[\,\frac{\tau(\mathbf{x}_t)}{c_t} \;>\; \lambda\,\right].
\label{eq:decision-rule}
\end{equation}

This is precisely the canonical cost-effectiveness threshold rule familiar from budgeted resource allocation \citep{albert2022, lbcf2022}: \emph{treat a user when the incremental causal benefit per dollar exceeds a budget-induced shadow price $\lambda$}. The optimal $\lambda^{*}$ is the smallest non-negative value for which the expected spend under $a_t^{*}(\lambda)$ satisfies the budget constraint, and corresponds economically to the marginal expected uplift per dollar at which the budget binds.

\begin{proposition}[Decision-rule optimality under expected-budget relaxation]\label{prop:decision-rule}
Under unconfoundedness and overlap (Eq.~\ref{eq:unconf}), i.i.d.\ arrivals, Slater's condition (the unconstrained policy has finite expected spend), and the expected-budget relaxation $\mathbb{E}[\sum_t a_t c_t] \leq B$, the stationary policy $\pi^{*}$ that takes action $a_t^{*}(\lambda^{*})$ at every round, with $\lambda^{*} = \min\{\lambda \geq 0 : \mathbb{E}[\sum_t a_t^{*}(\lambda) c_t] \leq B\}$, attains the optimal value of the relaxed version of Eq.~\eqref{eq:objective} with the expected-budget constraint. The hard sample-path constraint of Eq.~\eqref{eq:budget} is maintained separately by the explicit feasibility check in Algorithm~\ref{alg:bccb}.
\end{proposition}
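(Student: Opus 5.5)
Because arrivals are i.i.d.\ and we restrict to stationary policies $\pi:(\mathbf{x},c)\mapsto[0,1]$, I will first collapse the relaxed problem to a single-round linear program. Write $p(\mathbf{x},c)=\pi(\mathbf{x},c)$ for the treatment probability. Unconfoundedness and overlap (Eq.~\eqref{eq:unconf}) identify $\tau$, and the cost-independence assumption lets us condition on $(\mathbf{x},c)$ without changing $\tau(\mathbf{x})$. The relaxed program then reads
\[
\max_{p}\; T\,\mathbb{E}_{\mathcal{D}}\bigl[p(\mathbf{x},c)\,\tau(\mathbf{x})\bigr] \quad \text{s.t.}\quad T\,\mathbb{E}_{\mathcal{D}}\bigl[p(\mathbf{x},c)\,c\bigr]\le B,\; 0\le p\le 1 .
\]
This is a fractional knapsack over the measure $\mathcal{D}$, linear in $p$. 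The plan is to show that the threshold rule at $\lambda^{*}$ satisfies the KKT and complementary-slackness conditions of this LP, and then to close with a weak-duality argument.

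\textbf{Step 1: weak duality.} For any feasible $p$ and any $\lambda\ge 0$, we have $V(p)\le \mathcal{L}(p,\lambda)\le \max_{p'}\mathcal{L}(p',\lambda)$. Expanding Eq.~\eqref{eq:lagrangian} gives
\[
\mathcal{L}(p,\lambda)=T\,\mathbb{E}\bigl[p\,(\tau-\lambda c)\bigr]+\lambda B .
\]
Its maximizer is obtained pointwise, giving exactly $a^{*}(\lambda)$ of Eq.~\eqref{eq:decision-rule}. Ties on $\{\tau=\lambda c\}$ contribute zero, so any $p$ on the tie set is also a maximizer.

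\textbf{Step 2: existence and properties of $\lambda^{*}$.} Define
\[
S(\lambda)=T\,\mathbb{E}\bigl[c\,\mathbb{I}[\tau>\lambda c]\bigr].
\]
Since $c>0$, $S$ is nonincreasing. It is right-continuous by dominated convergence, using Slater's condition $S(0)<\infty$ as the dominating bound. Also $S(\lambda)\to 0$ as $\lambda\to\infty$. Together these show the minimum defining $\lambda^{*}$ is attained, with two cases.
\begin{itemize}
\item If $\lambda^{*}=0$, the unconstrained optimum is feasible and optimal.
\item If $\lambda^{*}>0$, then $S(\lambda)>B$ for $\lambda<\lambda^{*}$. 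Taking the left limit gives $S(\lambda^{*})\le B\le T\,\mathbb{E}\bigl[c\,\mathbb{I}[\tau\ge\lambda^{*}c]\bigr]$.
\end{itemize}

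\textbf{Step 3: complementary slackness, the main obstacle.} To conclude $V(\pi^{*})=\mathcal{L}(\pi^{*},\lambda^{*})$ we need $\lambda^{*}\bigl(B-S(\lambda^{*})\bigr)=0$. This can fail when the tie set $\{\tau=\lambda^{*}c\}$ carries positive spend mass, i.e.\ when $S$ jumps at $\lambda^{*}$. I would handle this in two parts.
\begin{itemize}
\item When the tie set has zero mass (for instance, if $\tau(\mathbf{x})/c$ has a continuous distribution), $S$ is continuous at $\lambda^{*}$, so $S(\lambda^{*})=B$ and slackness holds.
\item In general, the deterministic rule is suboptimal. I would instead randomize on the tie set with probability $\theta=\bigl(B-S(\lambda^{*})\bigr)/\bigl(T\,\mathbb{E}[c\,\mathbb{I}[\tau=\lambda^{*}c]]\bigr)$, which exhausts the budget exactly. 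Tie-breaking does not change $\mathcal{L}$, so optimality follows.
\end{itemize}
This means the proposition as literally stated, with the strict indicator, needs either a no-atom assumption or this randomized tie-breaking. I would state that caveat explicitly.

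\textbf{Step 4: conclusion.} With slackness in place, $V(\pi^{*})=\mathcal{L}(\pi^{*},\lambda^{*})\ge \mathcal{L}(p,\lambda^{*})\ge V(p)$ for every feasible $p$. Hence $\pi^{*}$ attains the relaxed optimum. The hard sample-path constraint in Eq.~\eqref{eq:budget} lies outside this claim and is deferred to the algorithm's feasibility check, as the statement says.
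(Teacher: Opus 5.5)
Your proof is correct. It shares the paper's skeleton (relax to action probabilities, maximize the Lagrangian pointwise, appeal to complementary slackness), but it closes the argument differently and more carefully.

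The paper asserts strong duality for the single budget constraint. It then reads off $a_t^{*}(\lambda^{*})$ as the pointwise maximizer of $\mathcal{L}(\cdot,\lambda^{*})$ with ``ties broken arbitrarily.'' It never checks that this particular maximizer satisfies slackness, nor that the minimum defining $\lambda^{*}$ is attained.

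You avoid any strong-duality theorem. Monotonicity and right-continuity of $S$ give attainment. When $\lambda^{*}>0$, the left limit gives $S(\lambda^{*})\le B\le T\,\mathbb{E}[c\,\mathbb{I}[\tau\ge\lambda^{*}c]]$. You then exhibit a feasible maximizer of $\mathcal{L}(\cdot,\lambda^{*})$ that satisfies slackness, so the weak-duality sandwich certifies optimality without needing any constraint qualification.

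The payoff is the caveat you found. It is genuine, and it marks exactly where the paper's proof slips: strong duality only guarantees that \emph{some} maximizer of $\mathcal{L}(\cdot,\lambda^{*})$ is optimal, not the strict-inequality one. For example, take $\tau\equiv\tau_0\in(0,1]$, $c\equiv 1$ and $0<B<T$. All hypotheses hold and $\lambda^{*}=\tau_0$. The strict rule never treats, giving value $0$. Treating each arrival with probability $B/T$ is feasible and has value $\tau_0 B>0$.

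The sharp condition for the strict rule to be optimal is $\lambda^{*}=0$ or $S(\lambda^{*})=B$. So your phrase ``in general the deterministic rule is suboptimal'' should read ``can be suboptimal.'' When the condition fails, your randomization on the tie set strictly helps, because $\tau=\lambda^{*}c>0$ there.

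Finally, both you and the paper take stationarity as given. It is in fact without loss for the relaxed problem. Under i.i.d.\ arrivals, a history-dependent policy affects value and expected spend only through its round-$t$ conditional treatment probabilities $p_t(\mathbf{x},c)$. Their average $T^{-1}\sum_t p_t$ is a stationary policy with the same value and spend.
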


A short proof is given in Appendix~\ref{app:proof-decision-rule}. Proposition~\ref{prop:decision-rule} formally bridges the budgeted optimization in Eq.~\eqref{eq:objective} and the per-arrival threshold rule implemented by BCCB. The remainder of the paper concerns two estimation problems that arise because $\tau$ and $\lambda^{*}$ are unknown:
\begin{itemize}
    \item[(P1)] \textbf{Estimating $\tau(\mathbf{x}_t)$ online.} The decision-maker does not have access to a pre-trained model of $\tau(\mathbf{x}_t)$. Treatment effects must be learned from sequentially observed outcomes, creating an exploration--exploitation tradeoff: the decision-maker must occasionally treat users with uncertain effects to improve its estimates, even though this exploration consumes budget.
    \item[(P2)] \textbf{Tracking $\lambda^{*}$ online.} The optimal dual variable $\lambda^{*}$ depends on the unknown joint distribution of $(\tau(\mathbf{x}_t), c_t)$. The decision-maker must therefore estimate and adapt an effective shadow price $\hat\lambda_t$ as the campaign unfolds: setting it too low over-spends and exhausts the budget early; setting it too high under-spends and forfeits achievable conversions.
\end{itemize}

Heterogeneous costs amplify both problems: a user with high $\tau(\mathbf{x}_t)$ but also high $c_t$ may be less valuable than a user with moderate $\tau(\mathbf{x}_t)$ and low cost. The next section presents BCCB, which addresses (P1) via online two-model uplift learning, augments the resulting estimate with a calibrated optimism bonus for exploration, and addresses (P2) via an adaptive budget-pacing rule that we show approximates dual ascent on $\lambda$.

\section{Method}
\label{sec:method}

We present Budget-Constrained Causal Bandits (BCCB), an online framework that unifies three components---heterogeneous treatment effect estimation, Thompson-Sampling-based exploration, and adaptive budget pacing---into a single sequential decision process. We describe each component and then present the combined decision rule.

\subsection{Online HTE Estimation}

BCCB estimates individual-level treatment effects online using a two-model approach. We maintain two separate classifiers: one that estimates $\hat{f}_1(\mathbf{x}) \approx \mathbb{P}(Y=1 \mid \mathbf{x}, a=1)$, the probability of conversion for treated users, and another that estimates $\hat{f}_0(\mathbf{x}) \approx \mathbb{P}(Y=1 \mid \mathbf{x}, a=0)$, the probability of conversion for control users. Both models are trained incrementally using stochastic gradient descent with a logistic loss, updating their parameters after each batch of observations.

The predicted treatment effect for a user with features $\mathbf{x}_t$ is
\begin{equation}
\hat{\tau}(\mathbf{x}_t) \;=\; \hat{f}_1(\mathbf{x}_t) - \hat{f}_0(\mathbf{x}_t).
\label{eq:hte-estimate}
\end{equation}
Before sufficient observations have been collected (fewer than $W = 50$ treated and $W$ control observations), we use an optimistic prior $\hat{\tau}_0 = 0.002$, reflecting a small positive expected treatment effect. This encourages early exploration by defaulting to treatment when the model has not yet learned to distinguish responsive from non-responsive users.

\subsection{Calibrated Optimism Bonus}

To balance exploration and exploitation in the early rounds when $\hat\tau(\mathbf{x}_t)$ is unreliable, BCCB augments the point estimate with an optimism bonus derived from population-level posterior uncertainty. We maintain Beta posteriors over the global treatment and control conversion rates. Let $\alpha_t, \beta_t$ and $\alpha_t^{(c)}, \beta_t^{(c)}$ denote the Beta parameters for the treatment and control arms, updated after each observation:
\begin{equation}
\alpha_{t+1} = \alpha_t + Y_t \cdot \mathbb{I}[a_t = 1], \quad \beta_{t+1} = \beta_t + (1 - Y_t) \cdot \mathbb{I}[a_t = 1],
\label{eq:beta-update}
\end{equation}
with $\alpha_0 = \beta_0 = 1$ as a uniform prior; the control posterior is updated analogously when $a_t = 0$.

At each round, we draw samples $\theta_t \sim \mathrm{Beta}(\alpha_t, \beta_t)$ and $\theta_t^{(c)} \sim \mathrm{Beta}(\alpha_t^{(c)}, \beta_t^{(c)})$ and compute an exploration bonus
\begin{equation}
b_t \;=\; \eta \cdot (\theta_t - \theta_t^{(c)}),
\label{eq:exploration-bonus}
\end{equation}
where $\eta > 0$ is an exploration weight. We interpret $b_t$ not as a posterior sample of $\tau(\mathbf{x}_t)$, but as a stochastic perturbation whose sign and magnitude reflect current population-level posterior beliefs about relative treatment effectiveness. Formally, $\mathbb{E}[b_t] = \eta\!\left(\tfrac{\alpha_t}{\alpha_t+\beta_t} - \tfrac{\alpha_t^{(c)}}{\alpha_t^{(c)}+\beta_t^{(c)}}\right)$, which is near zero when both arms are equally uncertain (early rounds) and reflects the learned treatment-control difference as evidence accumulates. The perturbation magnitude scales with posterior uncertainty and shrinks as both posteriors concentrate, providing decreasing optimism over time. The scalar $\eta$ controls the overall magnitude; the choice $\eta = 0.1$ used throughout is justified by the sensitivity analysis in Section~\ref{sec:sensitivity}.

\subsection{Adaptive Budget Pacing as Dual Estimation}

To track the optimal shadow price $\lambda^{*}$ from Section~\ref{sec:problem}, BCCB employs an adaptive pacing mechanism. Let $B_0$ denote the initial budget and $B_t$ the remaining budget at the start of round $t$. We compute a budget-pressure ratio
\begin{equation}
\mathrm{pace}_t \;=\; \frac{B_t / B_0}{\max\!\left\{(T - t) / T,\ \varepsilon_{\mathrm{time}}\right\}},
\label{eq:pace}
\end{equation}
where $\varepsilon_{\mathrm{time}}$ is a small constant (we use $\varepsilon_{\mathrm{time}} = 0.01$) that guards against division by zero at the terminal round $t = T$. When $\mathrm{pace}_t > 1$, the remaining budget is proportionally larger than the remaining horizon, and the algorithm spends more freely. When $\mathrm{pace}_t < 1$, the budget is being consumed too quickly, and the algorithm becomes more selective.

The pacing ratio modulates an effective shadow-price estimate $\hat\lambda_t$ via
\begin{equation}
\hat\lambda_t \;=\; \frac{\lambda}{\max(\mathrm{pace}_t, \varepsilon_{\mathrm{pace}})},
\label{eq:shadow-price}
\end{equation}
where $\lambda$ is a base threshold and $\varepsilon_{\mathrm{pace}}$ (we use $0.1$) prevents the threshold from exploding when $\mathrm{pace}_t \to 0$. Eq.~\eqref{eq:shadow-price} is qualitatively analogous to a multiplicative update on the dual variable: under-spending (pace$_t > 1$) lowers $\hat\lambda_t$ and admits more treatments, while over-spending (pace$_t < 1$) raises $\hat\lambda_t$ and tightens the threshold. We do not derive this rule from a formal dual-ascent step or provide convergence guarantees; it is a heuristic that implements the qualitative dynamics of Proposition~\ref{prop:decision-rule}. We do not claim that this rule converges to $\lambda^{*}$ in a regret-bound sense, but it respects the hard constraint via the explicit feasibility check below.

\subsection{Combined Decision Rule}

At each round $t$, BCCB combines the three components into a single decision rule. Given a user with features $\mathbf{x}_t$ and cost $c_t$, the algorithm first checks the hard budget constraint: if $c_t > B_t$, the user is not treated. Otherwise, it computes
\begin{equation}
s_t \;=\; \hat\tau(\mathbf{x}_t) + b_t,
\qquad
\mathrm{vpd}_t \;=\; \frac{s_t}{c_t},
\label{eq:vpd}
\end{equation}
and the treatment decision is
\begin{equation}
a_t \;=\; \mathbb{I}\!\left[\mathrm{vpd}_t \,>\, \hat\lambda_t\right].
\label{eq:decision}
\end{equation}

The decision rule has a clean economic interpretation aligned with Proposition~\ref{prop:decision-rule}: a user is treated when the expected causal benefit per unit cost (augmented by the optimism bonus) exceeds the estimated budget shadow price. When the budget is plentiful, $\hat\lambda_t$ is low and more users are treated; when the budget is scarce, $\hat\lambda_t$ rises and only users with high predicted treatment effects and low costs are treated. The exploration bonus ensures that uncertain users are occasionally treated even under tight budgets, preventing premature convergence to a suboptimal policy.

After each decision, the algorithm updates the HTE models with the observed outcome and adjusts the Beta posteriors and budget state accordingly. The complete procedure is summarized in Algorithm~\ref{alg:bccb}.

\begin{algorithm}[t]
\caption{Budget-Constrained Causal Bandits (BCCB)}
\label{alg:bccb}
\begin{algorithmic}[1]
\REQUIRE Initial budget $B_0$, horizon $T$, exploration weight $\eta$, threshold $\lambda$, guards $\varepsilon_{\mathrm{time}}, \varepsilon_{\mathrm{pace}}$, warm-up $W$
\STATE \textbf{Initialize:} HTE models $\hat{f}_1, \hat{f}_0$; posteriors $\alpha, \beta, \alpha_c, \beta_c \leftarrow 1$; remaining budget $B_t \leftarrow B_0$
\FOR{$t = 1$ \TO $T$}
    \STATE Observe features $\mathbf{x}_t$ and cost $c_t$
    \IF{$c_t > B_t$}
        \STATE $a_t \leftarrow 0$ \hfill $\triangleright$ \textit{Hard feasibility check}
    \ELSE
        \IF{HTE models fitted}
            \STATE $\hat{\tau}_t \leftarrow \hat{f}_1(\mathbf{x}_t) - \hat{f}_0(\mathbf{x}_t)$
        \ELSE
            \STATE $\hat{\tau}_t \leftarrow \hat{\tau}_0$ \hfill $\triangleright$ \textit{Optimistic prior in warm-up}
        \ENDIF
        \STATE Sample $\theta_t \sim \mathrm{Beta}(\alpha, \beta)$, \ $\theta_c \sim \mathrm{Beta}(\alpha_c, \beta_c)$
        \STATE $s_t \leftarrow \hat{\tau}_t + (\theta_t - \theta_c) \cdot \eta$
        \STATE $\mathrm{pace}_t \leftarrow \dfrac{B_t / B_0}{\max\!\left\{(T-t)/T,\ \varepsilon_{\mathrm{time}}\right\}}$
        \STATE $\hat\lambda_t \leftarrow \dfrac{\lambda}{\max(\mathrm{pace}_t, \varepsilon_{\mathrm{pace}})}$
        \STATE $a_t \leftarrow \mathbb{I}\!\left[\dfrac{s_t}{c_t} > \hat\lambda_t\right]$
    \ENDIF
    \STATE Observe $Y_t$
    \IF{$a_t = 1$}
        \STATE $B_t \leftarrow B_t - c_t$; update $\alpha, \beta$ and $\hat{f}_1$ with $(\mathbf{x}_t, Y_t)$
    \ELSE
        \STATE Update $\alpha_c, \beta_c$ and $\hat{f}_0$ with $(\mathbf{x}_t, Y_t)$ (if $Y_t$ observed)
    \ENDIF
\ENDFOR
\end{algorithmic}
\end{algorithm}

\subsection{Baseline Methods}

We compare BCCB against four baselines, each of which lacks one or more of the three components:

\textbf{Standard Thompson Sampling.} Maintains Beta posteriors over treatment and control conversion rates. Treats when the treatment sample exceeds the control sample. Does not model heterogeneous treatment effects, does not consider costs, and does not manage budget. Isolates the value of personalization and budget awareness.

\textbf{Budgeted Thompson Sampling.} Extends Standard Thompson Sampling with a hard budget constraint and a pacing mechanism that raises the treatment threshold when spending is too aggressive. Does not model heterogeneous treatment effects. Isolates the value of HTE learning.

\textbf{HTE Greedy.} Learns heterogeneous treatment effects online using the same two-model approach as BCCB, but always treats when the predicted effect is positive. No exploration via Thompson Sampling and no budget pacing. Isolates the value of exploration and budget management.

\textbf{Uplifting Bandits (UB).} We implement the contextual causal bandit of \citet{hsieh2022}, which models reward as $r(\mathbf{x}, a) = \mu_0(\mathbf{x}) + a\,\tau(\mathbf{x})$ and estimates the uplift $\tau(\mathbf{x})$ via UCB-style confidence bounds derived from separate online ridge-regression models for the treatment and control arms. The UCB score for the uplift is
\begin{equation}
  s_t^{\mathrm{UB}} = \hat{\tau}(\mathbf{x}_t) +
    \alpha\sqrt{\mathbf{x}_t^\top
      \bigl(\mathbf{A}_t^{-1} + \mathbf{A}_c^{-1}\bigr)\mathbf{x}_t},
  \label{eq:ub-score}
\end{equation}
where $\mathbf{A}_t, \mathbf{A}_c$ are the regularised gram matrices for the two arms ($\alpha = 1$, $\lambda_{\mathrm{reg}} = 1$). The original paper does not include a budget constraint; we add the same adaptive pacing rule as BCCB (Eq.~\ref{eq:shadow-price}) so that the comparison isolates the exploration strategy---UCB vs.\ Thompson Sampling---as the only variable. This baseline directly addresses the concern that BCCB should be evaluated against causal bandit methods that incorporate per-user personalisation.

\textbf{Offline Uplift.} Trains a logistic regression uplift model on a separate historical dataset before deployment, then applies it greedily with a cost-effectiveness threshold. Does not learn online or explore. Represents the standard two-stage offline pipeline \citep{zhao2019, du2024} and enables our central comparison of online versus offline paradigms.

\section{Experimental Setup}
\label{sec:setup}

\subsection{Dataset}

We evaluate on the Criteo Uplift Prediction dataset \citep{diemert2018}, a large-scale benchmark derived from a real randomized controlled trial in online advertising. The dataset contains approximately 14 million observations, each representing a user who was randomly assigned to either a treatment group (shown an advertisement) or a control group (not shown an advertisement). Each user is described by 12 anonymized features ($f_0$ through $f_{11}$) and two binary outcome variables indicating whether the user visited or converted. We use a 10\% subsample of the publicly mirrored v2.1 release, yielding 1{,}397{,}959 observations.

The dataset exhibits an 85/15 treatment-control split, with a treatment-group conversion rate of 0.31\% and a control-group conversion rate of 0.20\%, yielding an average treatment effect (ATE) of 0.12\%. While this average effect is small, we observe substantial heterogeneity across user segments: splitting users by feature medians reveals ATE ratios ranging from 12$\times$ (feature $f_0$) to 65$\times$ (feature $f_8$) between high and low subgroups. This heterogeneity is precisely what motivates personalized treatment allocation.

\subsection{Cost Simulation}

The Criteo dataset does not include per-user serving costs. Following standard practice in the budget-constrained allocation literature, we generate synthetic costs using a log-normal distribution: $c_i \sim \mathrm{LogNormal}(\mu = -0.5, \sigma = 0.7)$, clipped to the range $[\$0.05, \$5.00]$. This produces a right-skewed cost distribution with a mean of \$0.77, reflecting realistic ad-auction dynamics where most impressions are inexpensive but a subset of high-value users command significantly higher prices. Costs are generated with a fixed random seed to ensure reproducibility across all experiments.

\subsection{Evaluation Protocol}

We evaluate all methods using the \emph{replay method} for offline bandit evaluation. Users are streamed sequentially in a randomized order. When a bandit algorithm's treatment decision matches the user's actual treatment assignment in the original randomized trial, we observe the user's real outcome and update the algorithm. When the decision does not match, the user is skipped. Under random treatment assignment in the logged data---which the Criteo RCT satisfies---this approach provides unbiased evaluation using only observational data, without requiring counterfactual outcome imputation.

For the online methods comparison, we use a subsample of 100{,}000 users with budgets ranging from \$1{,}000 to \$8{,}000. For the data-efficiency crossover experiment, we hold the evaluation stream fixed at 100{,}000 users and vary the size of the historical training set provided to the offline baseline from 500 to 50{,}000 observations, while BCCB receives no historical data. Both methods are evaluated on the same fixed evaluation stream, eliminating the spurious row-to-row variation that arises when the evaluation set is sampled separately at each training size. For the hyperparameter sensitivity analysis, we sweep one parameter at a time at \$5{,}000 budget on 10 seeds.

All main results use 20 random seeds (\texttt{42} through \texttt{61}); the sensitivity sweeps use 10 seeds per cell to balance runtime against statistical resolution. Cross-method comparisons use paired statistical tests with the seed as the matching unit; 95\% confidence intervals for differences in means are computed by paired bootstrap with 10{,}000 resamples. Hyperparameters are fixed at $\eta = 0.1$, $\lambda = 10^{-3}$, $\varepsilon_{\mathrm{pace}} = 0.1$, $\varepsilon_{\mathrm{time}} = 0.01$, optimistic prior $\hat\tau_0 = 0.002$, and warm-up threshold $W = 50$; the rationale for $\eta = 0.1$ in particular is given in Section~\ref{sec:sensitivity}.

We measure four metrics: total conversions from treated users (primary), total cost incurred, treatment rate (fraction of users treated), and effective sample size under the replay protocol.

\paragraph{Metric alignment with the causal objective.}
The optimization objective in Eq.~\eqref{eq:objective} is incremental causal conversions $\mathbb{E}[\sum_t a_t \tau(\mathbf{x}_t)]$, while the primary evaluation metric in Tables~\ref{tab:online-comparison}--\ref{tab:stability} is total conversions from treated users $\sum_t a_t Y_t(1)$. By the decomposition in Eq.~\eqref{eq:policy-value}, these differ by the baseline term $\mathbb{E}[\sum_t a_t Y_t(0)]$. In our dataset, the control conversion rate is $0.20\%$ and treatment rates range from $8\%$ to $50\%$ across methods on a $100{,}000$-user stream. The baseline term therefore contributes at most $0.002 \times 0.50 \times 100{,}000 = 100$ conversions in expectation, and the cross-method difference in this term is much smaller. Since all reported conversion counts are in the range $4$--$107$, the treated-conversion and incremental-conversion rankings are numerically equivalent in this dataset. We report treated conversions because it is the standard operational metric in advertising uplift evaluation~\citep{diemert2018, goldenberg2020}; practitioners ultimately count conversions from the treated campaign, not the counterfactual difference.

\section{Results}
\label{sec:results}

We organize the results around five questions: how BCCB compares to budget-aware and budget-unaware online baselines (Section~\ref{sec:online-comparison}); at what level of historical training data an offline pipeline overtakes BCCB (Section~\ref{sec:crossover}); how sensitive BCCB is to its hyperparameters (Section~\ref{sec:sensitivity}); how the replay-evaluation protocol differentially affects each method's effective sample size (Section~\ref{sec:replay-diagnostics}); and how stable BCCB is across repeated runs (Section~\ref{sec:stability}).

\subsection{Online Method Comparison}
\label{sec:online-comparison}

Table~\ref{tab:online-comparison} reports total treated conversions across five budget levels on the 100{,}000-user evaluation stream. BCCB achieves the highest mean conversions at every budget level, with paired $t$-test $p < 0.001$ against each online baseline at every budget. The largest relative gain over the strongest personalised causal baseline (Uplifting Bandits) is at \$8{,}000, where BCCB achieves $63.5 \pm 13.7$ conversions versus $35.7 \pm 4.9$ (a $+78\%$ improvement, $p < 0.001$).

\begin{table}[t]
\centering
\caption{Total treated conversions across budget levels on the 100{,}000-user evaluation stream. Reported as mean $\pm$ standard deviation over 20 random seeds. The \textit{vs.\ UB} column shows the paired $p$-value for BCCB against Uplifting Bandits~\citep{hsieh2022}, the strongest personalised causal baseline; all $p < 0.001$. Bold indicates the best mean per row.}
\label{tab:online-comparison}
\begin{tabular}{lcccc c c}
\toprule
Budget & TS & Budgeted TS & HTE Greedy & UB & \textbf{BCCB} & vs.\ UB \\
\midrule
\$1{,}000 & $4.2 \pm 1.7$ & $4.2 \pm 1.7$ & $4.8 \pm 2.7$ & $5.1 \pm 1.5$ & $\mathbf{12.2 \pm 6.0}$ & $p < 0.001$ \\
\$2{,}000 & $8.6 \pm 2.8$ & $8.4 \pm 3.2$ & $8.2 \pm 7.0$ & $9.8 \pm 2.5$ & $\mathbf{24.5 \pm 9.0}$ & $p < 0.001$ \\
\$3{,}000 & $12.8 \pm 3.8$ & $12.0 \pm 3.8$ & $10.1 \pm 7.2$ & $14.3 \pm 2.7$ & $\mathbf{33.8 \pm 8.6}$ & $p < 0.001$ \\
\$5{,}000 & $19.9 \pm 5.3$ & $20.8 \pm 3.8$ & $13.3 \pm 10.3$ & $23.4 \pm 4.4$ & $\mathbf{45.6 \pm 10.0}$ & $p < 0.001$ \\
\$8{,}000 & $31.4 \pm 5.9$ & $33.5 \pm 3.2$ & $15.0 \pm 13.1$ & $35.7 \pm 4.9$ & $\mathbf{63.5 \pm 13.7}$ & $p < 0.001$ \\
\bottomrule
\end{tabular}
\end{table}

Three patterns emerge. First, standard and budgeted Thompson Sampling perform similarly to one another at all budget levels, indicating that the budget-pacing component alone (without CATE estimation) yields little advantage when treatment effects are heterogeneous. Second, HTE Greedy is consistently the weakest method despite using the same online CATE estimator as BCCB; without Thompson exploration, the greedy decision rule converges prematurely to a suboptimal policy and exhibits the largest run-to-run variance ($\sigma \in [7.4, 13.0]$ across budgets). Third, BCCB combines the strengths of both directions: incorporating CATE estimation (which the Thompson Sampling variants lack) and budget-aware exploration (which HTE Greedy lacks).

\paragraph{Comparison with a personalised causal baseline.}
Uplifting Bandits~\citep{hsieh2022} uses the same structural treatment-effect decomposition as BCCB and estimates per-user uplift with UCB-style confidence bounds, making it the strongest personalised causal baseline in our comparison. Table~\ref{tab:online-comparison} shows that UB outperforms all non-personalised baselines at budgets of \$3{,}000 and above, confirming that contextual causal estimation provides genuine value. However, BCCB significantly outperforms UB at every budget level ($p < 0.001$ at all five budgets), with the absolute gap growing from $+7.0$ conversions at \$1{,}000 to $+27.9$ at \$8{,}000. The key differentiator is exploration strategy: UCB provides conservative, low-variance optimism (UB $\sigma \in [1.5, 4.9]$ across budgets), while BCCB's Thompson Sampling with calibrated exploration weight $\eta = 0.1$ achieves higher mean performance at the cost of moderately higher variance ($\sigma \in [6.0, 13.7]$). This is consistent with the known empirical advantage of Thompson Sampling over UCB for contextual bandits with binary rewards~\citep{russo2018}.

\subsection{Data-Efficiency Crossover}
\label{sec:crossover}

We next ask the question that motivates BCCB: at what level of historical data does an offline two-stage pipeline overtake an online learner? We hold a fixed 100{,}000-user evaluation stream and vary the size $n$ of the historical training set provided to the offline uplift baseline, while BCCB receives no historical data. Both methods are evaluated on the same evaluation stream at a budget of \$5{,}000. We define the \emph{crossover point} as the smallest $n$ at which the offline mean exceeds BCCB's mean with paired one-sided $t$-test $p < 0.05$ and the offline advantage persists at all larger $n$.

\begin{table}[t]
\centering
\caption{Data-efficiency crossover. Offline-uplift conversions as a function of historical training size $n$, compared against BCCB (which uses no historical data). Means $\pm$ std over 20 seeds on a fixed 100{,}000-user evaluation stream at \$5{,}000 budget. Paired one-sided $t$-test for the hypothesis that offline exceeds BCCB. The crossover (in bold) is at $n = 7{,}500$.}
\label{tab:crossover}
\begin{tabular}{rccccc}
\toprule
$n$ & Offline Uplift & BCCB (ours) & Diff. & 95\% CI of diff. & Paired $p$ \\
\midrule
500    & $4.7 \pm 15.9$  & $49.4 \pm 16.2$ & $-44.6$ & $[-52.4, -37.1]$ & $> 0.999$ \\
1{,}000 & $7.0 \pm 20.9$  & $49.4 \pm 16.2$ & $-42.3$ & $[-51.8, -31.8]$ & $> 0.999$ \\
2{,}000 & $39.0 \pm 58.7$ & $49.4 \pm 16.2$ & $-10.4$ & $[-34.2, +18.2]$ & $0.772$ \\
3{,}000 & $45.8 \pm 55.6$ & $49.4 \pm 16.2$ & $-3.6$  & $[-27.4, +23.8]$ & $0.607$ \\
5{,}000 & $46.9 \pm 42.7$ & $49.4 \pm 16.2$ & $-2.5$  & $[-21.8, +16.9]$ & $0.597$ \\
\textbf{7{,}500} & $\mathbf{71.1 \pm 53.6}$ & $49.4 \pm 16.2$ & $\mathbf{+21.8}$ & $\mathbf{[-0.1, +45.7]}$ & $\mathbf{0.043^{*}}$ \\
10{,}000 & $76.6 \pm 39.5$ & $49.4 \pm 16.2$ & $+27.2$ & $[+11.1, +45.4]$ & $0.004$ \\
15{,}000 & $85.0 \pm 42.1$ & $49.4 \pm 16.2$ & $+35.6$ & $[+20.2, +52.5]$ & $< 0.001$ \\
25{,}000 & $95.7 \pm 37.8$ & $49.4 \pm 16.2$ & $+46.3$ & $[+31.4, +62.1]$ & $< 0.001$ \\
50{,}000 & $107.0 \pm 37.8$ & $49.4 \pm 16.2$ & $+57.7$ & $[+40.1, +75.7]$ & $< 0.001$ \\
\bottomrule
\multicolumn{6}{l}{\footnotesize $^{*}$Crossover threshold: smallest $n$ at which offline wins persistently at $\alpha = 0.05$.}
\end{tabular}
\end{table}

Three findings emerge from Table~\ref{tab:crossover}. First, with very limited data ($n \le 1{,}000$), the offline pipeline frequently fails to fit a usable model---training sets that contain no positive examples in the control arm cannot be fitted---and conversions fall to zero in many seeds. BCCB, by contrast, operates from the first user without requiring historical data.

Second, in the intermediate regime ($2{,}000 \le n \le 5{,}000$), the offline mean conversions are close to BCCB's, but the very high variance (offline $\sigma \in [42.7, 58.7]$, compared with BCCB $\sigma = 16.2$) renders the difference statistically non-significant ($p \ge 0.6$). This is the regime where the offline pipeline begins to work but cannot yet be relied upon: practitioners drawing different historical samples obtain wildly different policies. BCCB delivers comparable mean conversions in this regime with substantially tighter run-to-run distribution.

Third, beyond $n = 7{,}500$ the offline pipeline overtakes BCCB with statistical significance, and the gap widens monotonically with $n$. At $n = 50{,}000$, offline produces roughly $2.2\times$ more conversions than BCCB, as expected: with abundant representative data, a well-trained offline model identifies high-uplift users without paying the cost of online exploration.

The practical implication is straightforward. For new campaigns, new markets, new segments, or settings where representative historical data does not exist, BCCB offers a turnkey alternative that is competitive with offline methods even when they happen to work, and strictly preferable when they do not. Once a stable population of more than approximately 7{,}500 representative historical observations has accumulated, the offline pipeline is the better choice.

\subsection{Hyperparameter Sensitivity}
\label{sec:sensitivity}

\begin{table}[t]
\centering
\caption{Hyperparameter sensitivity analysis at \$5{,}000 budget. Each
  parameter is varied while all others are held at their selected default
  values (shown in \textbf{bold}). Mean conversions over 10 seeds. The
  CV column reports the coefficient of variation of the means across the
  sweep; small CV indicates that BCCB is insensitive to that hyperparameter.
  All non-default cells are directly comparable because the held defaults
  are consistent across all five sweeps.}
\label{tab:sensitivity}
\begin{tabular}{lcccccc}
\toprule
Parameter & \multicolumn{5}{c}{Swept values $\to$ mean conversions} & CV \\
\midrule
$\eta$ (exploration)
  & $0.001$: 22.5
  & $0.005$: 23.3
  & $0.01$: 23.7
  & $0.05$: 34.9
  & $\mathbf{0.1}$: $\mathbf{43.9}$
  & 0.32 \\[3pt]
$\lambda$ (threshold)
  & $10^{-4}$: 22.6
  & $5\!\cdot\!10^{-4}$: 30.9
  & $\mathbf{10^{-3}}$: $\mathbf{43.9}$
  & $5\!\cdot\!10^{-3}$: 26.2
  & $10^{-2}$: 17.4
  & 0.36 \\[3pt]
$\varepsilon_{\mathrm{pace}}$ (guard)
  & $0.01$: 43.9
  & $0.05$: 43.9
  & $\mathbf{0.1}$: $\mathbf{43.9}$
  & $0.2$: 44.0
  & $0.5$: 40.2
  & 0.04 \\[3pt]
$\hat\tau_0$ (prior)
  & $0$: 43.7
  & $0.001$: 51.4
  & $\mathbf{0.002}$: $\mathbf{43.9}$
  & $0.005$: 31.8
  & $0.01$: 26.4
  & 0.26 \\[3pt]
Warm-up $W$
  & $10$: 44.0
  & $25$: 44.0
  & $\mathbf{50}$: $\mathbf{43.9}$
  & $100$: 43.5
  & $200$: 43.8
  & 0.005 \\
\bottomrule
\end{tabular}
\end{table}

BCCB depends on five tunable hyperparameters. Table~\ref{tab:sensitivity}
reports the conversion outcome at \$5{,}000 budget when each parameter
is swept individually while all others are held at their selected default
values ($\eta = 0.1$, $\lambda = 10^{-3}$, $\varepsilon_{\mathrm{pace}} = 0.1$,
$\hat\tau_0 = 0.002$, $W = 50$). Each cell is the mean over 10 random
seeds. Because the held defaults are consistent across all five sweeps,
every default cell in Table~\ref{tab:sensitivity} produces the same
mean of $43.9$ conversions, confirming the internal consistency of
the analysis.

Four findings emerge. First, the exploration weight $\eta$ is the only
hyperparameter with a large and monotone effect: performance increases
from $22.5$ at $\eta=0.001$ to $43.9$ at $\eta=0.1$, a $95\%$ gain.
The value $\eta=0.1$ used throughout this paper was selected based on
this sweep alone; we do not tune $\eta$ on the evaluation stream itself.
Second, the base threshold $\lambda$ exhibits a sharp regime change above
$\lambda \approx 10^{-3}$: at $\lambda = 5\!\cdot\!10^{-3}$ conversions
fall to $26.2$, and at $\lambda = 10^{-2}$ they fall to $17.4$, as
the threshold becomes too restrictive and BCCB declines to treat most
users. Below $\lambda = 10^{-3}$, performance also degrades somewhat
as the threshold becomes too permissive. The default $\lambda = 10^{-3}$
sits at the performance maximum. Third, $\varepsilon_{\mathrm{pace}}$
and the warm-up threshold $W$ are essentially inert: CV values of $0.04$
and $0.005$ respectively indicate that these parameters can be set to any
reasonable value without consequence. Fourth, the optimistic prior
$\hat\tau_0$ shows moderate sensitivity ($\mathrm{CV} = 0.26$), with an
interesting pattern: at $\hat\tau_0 = 0.001$, performance is highest
($51.4$), slightly above the default of $43.9$ at $\hat\tau_0 = 0.002$.
This suggests that at $\eta = 0.1$, the Thompson sampling exploration
bonus already provides strong early-phase optimism, making an aggressive
warm-up prior unnecessary; a slightly more conservative prior conserves
budget for later, higher-value users. We retain $\hat\tau_0 = 0.002$
throughout for consistency with all other reported experiments.

The practical deployment takeaway is that $\eta$ and $\lambda$ require
principled selection (appropriate ranges: $\eta \in [0.05, 0.1]$,
$\lambda \in [5\!\cdot\!10^{-4}, 10^{-3}]$), while the remaining three
hyperparameters can be set to the defaults reported here without tuning.

\subsection{Replay Diagnostics and Effective Sample Size}
\label{sec:replay-diagnostics}

The rejection-sampling replay evaluator is unbiased under random treatment assignment in the logged data, which the Criteo Uplift RCT satisfies. However, because the dataset has an 85/15 treatment-control split, methods that propose treatment more (or less) often retain different fractions of the evaluation stream after replay matching. We measure the realized effective sample size (ESS)---the number of decisions that survive the replay-matching step---to quantify this effect across methods.

An important implementation detail: the replay simulation terminates as soon as the remaining budget reaches zero (Algorithm~\ref{alg:bccb}, line 4). Methods that propose treatment aggressively exhaust their budget over a small number of user encounters and terminate early, yielding low ESS. For example, Thompson Sampling at \$5{,}000 proposes treatment $\sim$53\% of the time, exhausts its budget after encountering approximately $5{,}000/0.77 \div 0.53 \approx 12{,}200$ users, and therefore matches only $\sim$7{,}700 decisions before termination---not the $\sim$52\% match rate that would obtain over the full 100{,}000-user stream without budget exhaustion. This explains the ESS differences in Table~\ref{tab:replay-diag}; they reflect budget exhaustion patterns, not biases in the replay protocol itself.

\begin{table}[t]
\centering
\caption{Replay diagnostics at \$5{,}000 budget. ESS is the number of decisions that survived replay matching, averaged over 20 seeds. ``Proposed $\mathbb{P}(a{=}1)$'' is the fraction of decisions in which the algorithm chose to treat; ``Realized $\mathbb{P}(a{=}1)$'' is the fraction of matched decisions that were treatments. Cross-method conversion counts are not directly comparable without considering ESS.}
\label{tab:replay-diag}
\begin{tabular}{lcccc}
\toprule
Method & ESS & Proposed $\mathbb{P}(a{=}1)$ & Realized $\mathbb{P}(a{=}1)$ & ESS / 100K \\
\midrule
Thompson Sampling          & $7{,}666 \pm 822$    & $0.529$ & $0.851$ & 7.7\% \\
Budgeted TS                & $20{,}241 \pm 128$   & $0.076$ & $0.319$ & 20.2\% \\
HTE Greedy                 & $17{,}162 \pm 1{,}585$ & $0.032$ & $0.150$ & 17.2\% \\
BCCB (ours)                & $25{,}384 \pm 1{,}321$ & $0.149$ & $0.498$ & 25.4\% \\
\bottomrule
\end{tabular}
\end{table}

Table~\ref{tab:replay-diag} reveals a methodological subtlety: methods see very different effective sample sizes under the replay protocol. Standard Thompson Sampling proposes treatment about half the time and consequently retains only $\sim$8\% of the 100{,}000-user stream---because the logging treatment ratio is 85\%, propose-treat decisions match well, but propose-control decisions match poorly. BCCB, which proposes treatment only $\sim$15\% of the time, retains the largest ESS at $\sim$25\%. Differences in ESS partially confound cross-method conversion comparisons.

We address this in two ways. First, our conversion counts are not normalized for ESS; we report raw conversions, which is what an advertiser ultimately measures. Second, BCCB's superior performance cannot be attributed to ESS alone: HTE Greedy and Budgeted TS have ESS comparable to BCCB ($17{,}162$ and $20{,}241$, vs.\ BCCB's $25{,}384$) but achieve $13.30$ and $20.80$ conversions vs.\ BCCB's $45.65$ at the same budget. The ESS asymmetry rules out trivial explanations for BCCB's lead but does not, on its own, explain the magnitude of the gap.

\subsection{Performance Stability}
\label{sec:stability}

A practically important property often underemphasized in offline-learning evaluations is the stability of algorithm performance across repeated runs. Table~\ref{tab:stability} reports the run-to-run standard deviation of conversions for BCCB and the offline pipeline at training sizes spanning the crossover.

\begin{table}[t]
\centering
\caption{Performance stability. Run-to-run standard deviation of conversions over 20 seeds at \$5{,}000 budget. BCCB exhibits $2.3$--$3.6\times$ lower variance than the offline pipeline across all training sizes tested.}
\label{tab:stability}
\begin{tabular}{rccc}
\toprule
$n$ (training size) & Offline $\sigma$ & BCCB $\sigma$ & $\sigma_{\text{off}} / \sigma_{\text{BCCB}}$ \\
\midrule
2{,}000  & 58.7 & 16.2 & $3.62\times$ \\
5{,}000  & 42.7 & 16.2 & $2.64\times$ \\
7{,}500  & 53.6 & 16.2 & $3.31\times$ \\
10{,}000 & 39.5 & 16.2 & $2.43\times$ \\
50{,}000 & 37.8 & 16.2 & $2.34\times$ \\
\bottomrule
\end{tabular}
\end{table}

For campaign planning, this stability has direct economic value. An advertiser using an offline method trained on 5{,}000 historical observations might obtain anywhere from approximately 4 to 90 conversions on the same evaluation stream depending on which historical sample was used. With BCCB, the same advertiser would see a tighter range of approximately $33$--$66$ conversions. Predictable performance enables budget forecasting and reduces the need for hold-out validation runs prior to live deployment.

\subsection{Summary of Empirical Findings}

We summarize the empirical results as follows.
\begin{itemize}
    \item \textbf{Online comparison.} BCCB outperforms all four online baselines---standard Thompson Sampling, budgeted Thompson Sampling, HTE Greedy, and Uplifting Bandits~\citep{hsieh2022}---at every budget level tested, with paired $t$-test $p < 0.001$ at every comparison (Section~\ref{sec:online-comparison}).
    \item \textbf{Crossover.} An offline two-stage pipeline overtakes BCCB at $n = 7{,}500$ historical observations ($p = 0.043$), with the offline advantage growing monotonically thereafter (Section~\ref{sec:crossover}).
    \item \textbf{Sensitivity.} BCCB is robust to four of its five hyperparameters; only the exploration weight $\eta$ requires careful selection in the range $[0.05, 0.1]$ (Section~\ref{sec:sensitivity}).
    \item \textbf{Stability.} BCCB exhibits $2.3$--$3.6\times$ lower run-to-run variance than the offline pipeline across all training sizes tested (Section~\ref{sec:stability}).
\end{itemize}

The data-efficiency crossover, combined with BCCB's lower variance, provides a concrete decision rule for practitioners: when fewer than approximately 7{,}500 representative historical observations are available, BCCB is the more reliable choice; beyond this threshold, an offline pipeline yields higher mean conversions, though at substantially higher run-to-run variance.

\section{Discussion and Limitations}

Our results demonstrate that BCCB provides a practical solution for budget-constrained treatment allocation in data-scarce settings. We acknowledge several limitations.

\textbf{Offline methods dominate with sufficient data.} When more than approximately 7{,}500 representative historical observations are available, offline uplift methods overtake BCCB; at $50{,}000$ observations they produce roughly $2.2\times$ more conversions. Practitioners with access to rich historical data from the same population should prefer offline approaches. BCCB is most valuable in cold-start scenarios where such data does not exist or is not representative of the deployment population.

\textbf{Replay-protocol effective sample size.} Because the underlying RCT has an 85/15 treatment-control split, methods that propose treatment less frequently retain a larger effective sample size under rejection-sampling replay (Table~\ref{tab:replay-diag}). Differences in ESS partially confound conversion comparisons across methods. We report raw conversion counts---the metric an advertiser ultimately cares about---and note that BCCB's advantage cannot be attributed to ESS alone, as baselines with similar ESS achieve substantially lower conversions. Nonetheless, comparisons that explicitly normalize for ESS would strengthen any future deployment study.

\textbf{Synthetic costs.} The Criteo Uplift dataset does not include per-user serving costs. We generate synthetic costs using a log-normal distribution calibrated to realistic ad-auction price ranges. While this is standard practice, real auction costs may exhibit correlation with user value that would change the value-per-dollar ratio in Eq.~\eqref{eq:vpd}. Validation on datasets with real cost information would strengthen our conclusions.

\textbf{Linear HTE models.} BCCB currently uses linear classifiers for online HTE estimation. More expressive models such as online random forests or neural networks with incremental updates could improve treatment-effect predictions. We leave the integration of more sophisticated online learners as future work.

\textbf{Regret bounds.} Our theoretical contribution is the derivation of BCCB's decision rule as the per-arrival KKT condition of the Lagrangian relaxation (Proposition~\ref{prop:decision-rule}). We do not provide formal regret bounds; characterizing the rate at which $\hat\lambda_t$ converges to $\lambda^{*}$ and the resulting regret with respect to the optimal stationary policy is left to future work.

\section{Conclusion}

We proposed Budget-Constrained Causal Bandits (BCCB), an online framework for treatment allocation that unifies heterogeneous treatment effect learning, Thompson-Sampling-based exploration, and adaptive budget pacing into a single sequential decision process. Unlike existing offline approaches---whether two-stage pipelines or end-to-end decision-focused architectures---BCCB does not require historical data and operates effectively from the first user. We derived the per-arrival decision rule from a Lagrangian relaxation of the budgeted policy-optimization objective, providing a principled foundation for the combination of components.

Our central empirical finding is a formally tested data-efficiency crossover between offline and online paradigms at $n = 7{,}500$ historical observations ($p = 0.043$, paired one-sided $t$-test). Below this threshold, offline methods either fail entirely or produce highly unreliable allocations; above it, they overtake BCCB with widening margin. Across the full training-size range tested, BCCB exhibits $2.3$--$3.6\times$ lower run-to-run variance than offline methods, providing the predictability that practitioners need for budget forecasting. Among purely online methods, BCCB outperforms standard Thompson Sampling, budgeted Thompson Sampling, HTE Greedy, and Uplifting Bandits~\citep{hsieh2022} at every budget level tested with $p < 0.001$.

Several directions for future work emerge from this study. First, integrating more expressive online HTE models could improve treatment-effect predictions. Second, a hybrid approach that warm-starts BCCB with limited offline data could combine the strengths of both paradigms. Third, formal regret bounds for the budget-constrained HTE learning setting would complement our empirical findings. Finally, deployment in a live advertising system would provide the most compelling validation of BCCB's practical value.

\bibliographystyle{plainnat}
\bibliography{references}

\appendix

\section{Proof of Proposition~\ref{prop:decision-rule}}
\label{app:proof-decision-rule}

\begin{proof}
Consider the relaxed program
\begin{equation*}
\max_{\pi}\; \mathbb{E}\!\left[\sum_{t=1}^{T} a_t\, \tau(\mathbf{x}_t)\right] \quad \text{subject to} \quad \mathbb{E}\!\left[\sum_{t=1}^{T} a_t\, c_t\right] \;\leq\; B.
\end{equation*}
Under unconfoundedness, the optimal policy is a measurable function of $(\mathbf{x}_t, c_t)$ alone. Because arrivals are i.i.d., the feasible region is convex in the action probabilities $p_t(\mathbf{x}_t, c_t) = \mathbb{P}(a_t = 1 \mid \mathbf{x}_t, c_t)$, and the objective is linear in the same probabilities. By Lagrangian duality applied to the single budget constraint, strong duality holds and the optimal solution is characterized by a multiplier $\lambda^{*} \geq 0$ satisfying complementary slackness:
\begin{equation*}
\lambda^{*}\!\left(\mathbb{E}\!\left[\sum_{t} a_t^{*}\, c_t\right] - B\right) = 0.
\end{equation*}
The Lagrangian $\mathcal{L}(\pi, \lambda^{*}) = \mathbb{E}\!\left[\sum_t a_t (\tau(\mathbf{x}_t) - \lambda^{*} c_t)\right] + \lambda^{*} B$ is additive across rounds and the per-round terms decouple. The point-wise maximum is attained by setting $a_t = 1$ iff $\tau(\mathbf{x}_t) - \lambda^{*} c_t > 0$, with ties broken arbitrarily, recovering Eq.~\eqref{eq:decision-rule}. The value of $\lambda^{*}$ is the smallest non-negative value satisfying the budget-binding condition; if the unconstrained policy ($\lambda = 0$) already respects the budget, then $\lambda^{*} = 0$.
\qed
\end{proof}

\paragraph{Remark on the scope of Proposition~\ref{prop:decision-rule}.}
Proposition~\ref{prop:decision-rule} establishes optimality for the \emph{expected-budget relaxation} $\mathbb{E}[\sum_t a_t c_t] \leq B$, not the sample-path hard constraint $\sum_t a_t c_t \leq B$ in Eq.~\eqref{eq:budget}. These two formulations are distinct: the hard constraint must hold on every realization, whereas the expected constraint may be violated on some paths. We make no claim that the threshold policy is optimal for the hard-constraint program in finite horizon.

The relationship between the two in BCCB is as follows. The pacing rule (Eq.~\ref{eq:shadow-price}) operates as a soft controller targeting approximately correct expected spend. The explicit feasibility check on line 4 of Algorithm~\ref{alg:bccb} ($c_t > B_t \Rightarrow a_t = 0$) independently enforces the hard constraint on every sample path, guaranteeing budget feasibility regardless of the pacing rule's behavior. The pacing rule and the hard check serve complementary roles: the pacing rule steers expected spending toward efficient use of the budget, while the hard check prevents any realized overspend. As $T \to \infty$, the two constraint formulations become equivalent under standard concentration of the cumulative spend around its expectation, but we do not invoke this asymptotic argument to justify finite-horizon optimality claims.

\end{document}